\DeclareMathOperator{\E}{\mathbb{E}}
\newtheorem{corollary}{Corollary}
\newtheorem{theorem}{Theorem}
\icmltitlerunning{Functional Space Analysis of Local GAN Convergence}
\begin{document}

\twocolumn[
\icmltitle{Functional Space Analysis of Local GAN Convergence}



\icmlsetsymbol{equal}{*}

\begin{icmlauthorlist}
\icmlauthor{Valentin Khrulkov}{ya}
\icmlauthor{Artem Babenko}{ya,hse}
\icmlauthor{Ivan Oseledets}{sk}
\end{icmlauthorlist}

\icmlaffiliation{ya}{Yandex, Russia}
\icmlaffiliation{hse}{National Research University Higher School of Economics, Moscow, Russia}
\icmlaffiliation{sk}{Skolkovo Institute of Science and Technology, Moscow, Russia}
\icmlcorrespondingauthor{Valentin Khrulkov}{khrulkov.v@gmail.com}

\icmlkeywords{Machine Learning, ICML}

\vskip 0.3in
]



\printAffiliationsAndNotice{}  

\begin{abstract}
Recent work demonstrated the benefits of studying continuous-time dynamics governing the GAN training. However, this dynamics is analyzed in the model parameter space, which results in finite-dimensional dynamical systems. We propose a novel perspective where we study the local dynamics of adversarial training in the general functional space and show how it can be represented as a system of partial differential equations. Thus, the convergence properties can be inferred from the eigenvalues of the resulting differential operator. We show that these eigenvalues can be efficiently estimated from the target dataset before training. Our perspective reveals several insights on the practical tricks commonly used to stabilize GANs, such as gradient penalty, data augmentation, and advanced integration schemes. As an immediate practical benefit, we demonstrate how one can a priori select an optimal data augmentation strategy for a particular generation task.

\end{abstract}
\section{Introduction}
Generative Adversarial Networks (GANs) \citep{goodfellow2014generative} allow for efficient learning of complicated probability distributions from samples. However, training such models is notoriously complicated: the dynamics can exhibit oscillatory behavior, and the convergence can be very slow. To alleviate this, a number of practical tricks, including the usage of data augmentation \citep{karras2020training,zhao2020image,zhang2019consistency} and optimization methods inspired by numerical integration schemes for ODEs \citep{qin2020training} have been proposed. Nevertheless, the theory explaining the success of these methods is only starting to catch up with practice.

The standard way of training a GAN is to use simultaneous gradient descent. In \citet{nagarajan2017gradient} it was shown that under mild assumptions, this method is locally convergent. The authors \citet{mescheder2018training} demonstrated that when these assumptions are not satisfied, the usage of regularization methods such as the gradient penalty \citep{roth2017stabilizing} or consensus optimization \citep{mescheder2017numerics} is required to achieve convergence. In \citet{balduzzi2018mechanics,nie2020towards,liang2019interaction} the local convergence is studied based on the eigenvalues of the Jacobian of the dynamics near the equilibrium. Such analysis involves the parameterization of the generator and the discriminator by neural networks and usually does not take into account the properties of the target distribution.

Our key idea is that this dynamics can be efficiently analyzed in the \emph{functional space}. This is achieved by constructing a local quadratic approximation of the GAN training objective and writing down the dynamics as a system of partial differential equations (PDEs). The differential operator underlying this system has a remarkably simple form, and its spectrum can be analyzed in terms of the fundamental properties of the target distribution. Furthermore, we show that the convergence of the dynamics is determined by the \emph{Poincar\'{e} constant} of this distribution. Intuitively, this constant describes the connectivity properties of a distribution; for instance, it is smaller for multimodal datasets with disconnected modes, where GAN convergence is known to be slower. This connection is practically important since the Poincar\'{e} constant of a dataset can be easily estimated a priori, and the larger it is, the better is the expected convergence of GAN. Thus, we can analyze common techniques that alter the train set in terms of their effect on the Poincar\'{e} constant. For instance, one can choose a proper augmentation strategy that increases the Poincar\'{e} constant the most.
The main contributions of our paper are:
\vspace{-3mm}
\begin{itemize}
    \item We develop a linearized GAN training model in the functional space in the form of a PDE.
    
    \item We derive explicit formulas connecting the eigenfunctions and eigenvalues of the resulting PDE operator with the target distribution properties. This connection provides a theoretical justification for common GAN stabilization techniques.
    
    \item We describe an efficient, practical recipe for choosing optimal parameters for gradient penalty and data augmentations for a particular dataset.
\end{itemize}

\section{Second-order approximation of the GAN objective}
We assume that data samples $\{X_{i}\}_{i=1}^N \in \mathbb{R}^d$ are produced by a target probability measure $\mu$. The generator function $G(z)$ is a deterministic mapping from a latent space $\mathbb{R}^{l}$ to $\mathbb{R}^d$; $z$ is sampled from a known probability measure $\mu_z$. The discriminator $D$ is a real-valued function on $\mathbb{R}^d$ which goal is to distinguish between real and fake samples. The GAN objective can be written as a min-max problem 
\begin{equation}\label{pdegan:saddle0}
    \max_G \min_D f(D, G),
\end{equation}
where
\begin{equation}\label{pdegan:mainobjective}
    f(D, G) = \E_{x \sim \mu} \phi_1(D(x)) +  \E_{z \sim \mu_z} \phi_2(D(G(z)),
\end{equation}
and $\phi_1$, $\phi_2$ are some scalar convex twice differentiable functions.
For example, for the LSGAN \cite{mao2017least}
$$\phi_1(x) = \frac{1}{2} (x - 1)^2, \quad \phi_2(x) = \frac{1}{2} (x + 1)^2.$$
We analyze the behaviour of a GAN in a neighborhood of the Nash equilibrium corresponding to an optimal discriminator function $D_*$ and an optimal generator function $G_*$. We make standard assumptions \citep{nagarajan2017gradient,mescheder2018training,nie2020towards} that the optimal discriminator satisfies $D_*(x)=0 \ \forall x$ and the measure produced by $G_*$ is equal to $\mu$. Similarly, we also assume that $\phi_1''(0) + \phi_2''(0) \geq 0$ and $\phi_1'(0)=-\phi_2'(0)\neq 0$. In this setting we can recover many common GAN variations including the vanilla GAN \citep{goodfellow2014generative}, WGAN \citep{arjovsky2017wasserstein}, LSGAN \citep{mao2017least}.

The standard way of solving the min-max problem is to represent $D$ and $G$ by some parametric models and solve it in the parameter space. The convergence of the resulting dynamics can be studied by linearizing it around the equilibrium point \citep{nagarajan2017gradient,mescheder2018training,nie2020towards}. This is equivalent to the construction of the quadratic approximation of $f(D, G)$ in the parameter space. In this paper, we do the quadratic approximation first, obtaining a new saddle point problem in a \emph{functional} space that approximates the local dynamics of standard GAN training methods.
This representation has a remarkably simple form, and its properties can be studied in detail. 
\paragraph{Technical assumptions.}
We assume that $\mu$ is a measure with positive density $\rho > 0$ on $\mathbb{R}^d$, i.e., $d\mu = \rho(x)dx$. In practice, this can be (formally) achieved by smoothing a discrete data distribution with Gaussian noise. As the functional space, we consider $L_2(\mu)=L_2(\mu, \mathbb{R}^d)$ equipped with a natural weighted inner product:
\begin{equation*}
    \langle u_1, u_2 \rangle_{\mu} = \int u_1(x) u_2(x) d \mu.
\end{equation*}
We will also often utilize differential operators, which should be understood in the distributional sense. In these cases, the functions under study are assumed to be the elements of the corresponding Sobolev space. Specifically, we will utilize the weighted Sobolev spaces $H^1(\mu)$ and $H^2(\mu)$. These are spaces of generalized functions such that their distributional first order and second order derivatives respectively lie in the $L_2(\mu)$.

We start by finding the second-order approximation to the GAN objective near Nash equilibrium in the functional space. We assume that $G_*$ is invertible, i.e., for any $x \sim \mu$ there exists a unique $z\sim \mu_{z}$ such that $G_*(z) = x$.
Let \mbox{$(\delta D, \delta G)$} be the \emph{functional variations} of the optimal discriminator and generator respectively. 
\begin{theorem}{\label{pdegan:thmapprox}}
Let $D = D_* + \delta D = \delta D, G = G_* + \delta G$.
Let us denote $u(x)=\delta D(x), v(x) = (\delta G)(G^{-1}_*(x))$. 
Then, \eqref{pdegan:mainobjective} can be approximated as 
\begin{equation*}
    f(D, G) = f_0 + g(u, v) + \mbox{higher order terms in $u$, $v$},
\end{equation*}
where
\begin{equation*}
    g(u, v) = \alpha  \langle u, u \rangle_{\mu} + \beta \langle \nabla_x u, v\rangle_{\mu},
\end{equation*}
and $\alpha = \frac{1}{2} \left(\phi''_1(0) + \phi''_2(0)\right), \beta = \phi'_2(0)$ and~$f_0=f(D_*, G_*)$.
\end{theorem}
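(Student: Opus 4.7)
The plan is to perform a standard two-variable Taylor expansion of the integrand at $(D_*,G_*)=(0,G_*)$ and then use the push-forward identity $(G_*)_\#\mu_z=\mu$ to convert every expectation over $\mu_z$ into an expectation over $\mu$.

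First I would handle the real-data term. Since $D_*\equiv 0$, we have $D(x)=u(x)$, so a second-order Taylor expansion of $\phi_1$ about $0$ gives
\begin{equation*}
\phi_1(u(x)) = \phi_1(0) + \phi_1'(0)\,u(x) + \tfrac{1}{2}\phi_1''(0)\,u(x)^2 + o(u(x)^2),
\end{equation*}
and integrating against $\mu$ yields the constant $\phi_1(0)$, a linear functional $\phi_1'(0)\langle u,1\rangle_\mu$, and the quadratic term $\tfrac{1}{2}\phi_1''(0)\langle u,u\rangle_\mu$.

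Next I would handle the fake-data term, which is the substantive step because it involves two perturbations simultaneously. Fix $z$ and set $x=G_*(z)$; then $D(G(z))=u\bigl(x+\delta G(z)\bigr)$. Expanding $u$ about $x$ to first order in the spatial variable and using $v(x)=\delta G(G_*^{-1}(x))$ gives
\begin{equation*}
u(x+\delta G(z)) = u(x) + \langle \nabla_x u(x),\,v(x)\rangle + o(\lVert v(x)\rVert).
\end{equation*}
A second-order Taylor expansion of $\phi_2$ about $0$ then produces, after dropping terms that are cubic or higher jointly in $(u,v)$,
\begin{equation*}
\phi_2(D(G(z))) = \phi_2(0) + \phi_2'(0)\bigl[u(x)+\langle\nabla_x u,v\rangle\bigr] + \tfrac{1}{2}\phi_2''(0)\,u(x)^2 + \text{h.o.t.}
\end{equation*}
Now I would invoke $(G_*)_\#\mu_z=\mu$ (which is exactly what invertibility of $G_*$ with $G_*\mu_z=\mu$ gives us) to rewrite $\mathbb{E}_{z\sim\mu_z}[\,\cdot\,\circ G_*]=\mathbb{E}_{x\sim\mu}[\,\cdot\,]$, so the entire expansion can be expressed in terms of integrals with respect to $\mu$ of the fields $u$ and $v$.

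Finally I would add the two expansions. The constant piece collects into $f_0=\phi_1(0)+\phi_2(0)=f(D_*,G_*)$. The linear-in-$u$ piece is $(\phi_1'(0)+\phi_2'(0))\langle u,1\rangle_\mu$, which vanishes by the assumption $\phi_1'(0)=-\phi_2'(0)$. The surviving quadratic piece is exactly
\begin{equation*}
\tfrac{1}{2}(\phi_1''(0)+\phi_2''(0))\langle u,u\rangle_\mu + \phi_2'(0)\langle \nabla_x u,v\rangle_\mu = \alpha\langle u,u\rangle_\mu + \beta\langle\nabla_x u,v\rangle_\mu.
\end{equation*}
The main obstacle is not algebraic but analytical: to turn the formal Taylor expansions into a genuine $o(\lVert u\rVert^2+\lVert v\rVert^2)$ remainder one needs control of the $C^2$ norms of $\phi_1,\phi_2$ near $0$ and enough regularity of $u$ to make sense of $\nabla_x u$ pointwise $\mu$-a.e.; this is precisely why the statement is made in the weighted Sobolev setting $H^1(\mu),H^2(\mu)$ introduced above, and the remainder estimates then follow from standard mean-value bounds applied pointwise under the integral sign.
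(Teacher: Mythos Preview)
Your proposal is correct and follows essentially the same approach as the paper: a Taylor expansion of $\phi_1$ and $\phi_2$ about $0$, a spatial expansion of $u$ at $G_*(z)$ to pick up the $\langle\nabla_x u,v\rangle$ term, the push-forward $(G_*)_\#\mu_z=\mu$ to express everything as $\mu$-integrals, and the cancellation of the linear-in-$u$ piece via $\phi_1'(0)=-\phi_2'(0)$. Your write-up is in fact more explicit than the paper's (which simply states the outcome of the expansion), and your closing remark about the regularity needed for a genuine remainder bound is a welcome addition that the paper leaves implicit.
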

\begin{proof}
The proof is straightforward: we use Taylor expansion to approximate both terms in the objective up to second order and also use the fact that in the Nash equilibrium the first order terms sum up to $0$.
\begin{equation*}
\begin{split}
    &f(D_* + \delta D, G_* + \delta G) = f(\delta D, G + \delta G)  = \\
    &= \E_{x \sim \mu} \phi_1(\delta D(x)) + \E_{z \sim \mu_z} \phi_2(\delta D(G(z) + \delta G(z)) \\
    &= f_0 + \alpha \E_{x \sim \mu}  \delta D^2 + \beta \E_{z \sim \mu_z}  \langle \nabla_x \delta D(G(z)), \delta G(z) \rangle + \\
    & + \mbox{higher order terms},
    \end{split}
\end{equation*}
From the definitions of $\langle \cdot, \cdot \rangle_{\mu}$, $u$ and $v$ the statement of the theorem follows.
\end{proof}
\section{Continuous gradient descent-ascent in the functional space}
Now we need to solve the min-max problem of the form
\begin{equation}\label{pdegan:approxsaddle}
    \min_u \max_v g(u, v), \quad g(u, v) = \alpha  \langle u, u \rangle_{\mu} + \beta \langle \nabla_x u, v\rangle_{\mu}.
\end{equation}
We will refer to \eqref{pdegan:approxsaddle} as \emph{linearized GAN problem} (lGAN). Since it is a local quadratic approximation of the original min-max problem, local behavior of GAN training methods can be understood on this task. The linearized GAN problem depends solely on the measure $\mu$, and we will see what are the particular fundamental properties of this measure that determine the convergence of the gradient-based method for solving \eqref{pdegan:approxsaddle}. 

We will use for $u$ the weighted Sobolev space $H^2(\mu)$ and for $v$ the weighted Sobolev space $H^1(\mu)$. 
The continuous descent-ascent flow in the functional space can be written as:
\begin{equation}\label{pdegan:odepdegan}
    u_t = -\nabla_u g(u, v), v_t = \nabla_v g(u, v),
\end{equation}
where $\nabla_u g$ and $\nabla_v g$ are Fr\'{e}chet derivatives of the functional $g(\cdot, \cdot)$ with respect to $u$ and $v$. It is important to define what is the meaning of \eqref{pdegan:odepdegan}, i.e.,  it should be understood in the weak sense. Select a test function $\hat{u}$ and take the scalar product of the first equation with it. We get
\begin{equation}
    \langle u_t, \hat{u} \rangle_{\mu} = -\langle\nabla_u g(u, v), \hat{u}\rangle_{\mu}, \, \forall \hat{u} \in H^2(\mu)
\end{equation}
Similar equation holds for $v$. The evaluation of the right-hand side is done by integration by parts, which leads to the system of the form
\begin{equation}\label{pdegan:weakform}
    \begin{split}
    &\langle u_t, \hat{u}\rangle_{\mu} = -\alpha \langle u, \hat{u} \rangle_{\mu} - \beta \langle \nabla_x \hat{u}, v \rangle_{\mu}, \, \forall \hat{u} \in H^2(\mu).\\
    &\langle v_t, \hat{v} \rangle_{\mu} = \beta \langle \nabla_x u, \hat{v} \rangle_{\mu}, \, \forall \hat{v} \in H^1(\mu).
    \end{split}
\end{equation}
By making use of the density $\rho$, equations \eqref{pdegan:weakform} can be rewritten in the strong form as
\begin{equation}\label{pdegan:eq1}
\begin{split}
    & \rho u_t = -\alpha \rho u + \beta \nabla_x \cdot (\rho v), \\
    & \rho v_t = \rho \beta \nabla_x u.
\end{split}
\end{equation}

We are interested in the asymptotic behavior of $u$ and $v$ when $t$ goes to infinity. 
It is completely described by the eigenvalues of the operator on the right-hand side of \eqref{pdegan:weakform}.
We will say that  $(u_{\lambda}, v_{\lambda})$ is an eigenfunction with an eigenvalue $\lambda$ if it satisfies the following system of equations.
\begin{equation}\label{pdegan:eigenproblem}
    \begin{split}
    &\lambda \langle u_{\lambda}, \hat{u}\rangle_{\mu} = -\alpha \langle u_{\lambda}, \hat{u}\rangle_{\mu} - \beta \langle\nabla_x \hat{u}, v_{\lambda}\rangle_{\mu}, \, \forall \hat{u} \in H^2(\mu).\\
    &\lambda \langle v_{\lambda}, \hat{v}\rangle_{\mu} = \beta \langle \nabla_x u_{\lambda}, \hat{v}\rangle_{\mu}, \forall \hat{v} \in H^1(\mu). 
    \end{split}
\end{equation}

\section{Eigenfunctions and eigenvalues of the linearized operator}

If $(u_{\lambda}, v_{\lambda})$ form an eigenfunction, then the solution of the time-dependent problem \eqref{pdegan:weakform} with the initial conditions $(u(0)=u_{\lambda}, v(0) = v_{\lambda})$ can be written as
\begin{equation}\label{pdegan:timedep}
    u(t) = u_{\lambda} \exp(\lambda t), \quad v(t) = v_{\lambda} \exp(\lambda t).
\end{equation}
We will see that $\{ (u_{\lambda}, v_{\lambda}) \}_{\lambda}$ form a basis in our functional space and thus an arbitrary solution can be written as a series of terms \eqref{pdegan:timedep}.
Thus, the real parts of the spectrum of the operator in \eqref{pdegan:weakform} should be less than $0$ in order for the system to have asymptotic stability. We will demonstrate that our system does not have eigenvalues with a positive real part but has a naturally interpretable kernel.
\subsection{The kernel.}

We find that the kernel has the following form.
\begin{corollary}
Let $(u_0, v_0)$ be an eigenfunction with $\lambda = 0$.  Then,
\begin{equation}\label{pdegan:kernel1}
   u_0 = C, \quad \langle\nabla_x \hat{u}, v_0\rangle_{\mu} = 0, \, \forall \hat{u} \in H^2(\mu),
\end{equation}
or in the strong form:
\begin{equation}\label{pdegan:kernel2}
   u_0 = C, \nabla_x \cdot (\rho v_0) = 0.
\end{equation}
Here $C$ is a constant such that $C\alpha=0$. I.e., for $\alpha\neq0$ we get $C=0$, and $C\in\mathbb{R}$ otherwise. 
\end{corollary}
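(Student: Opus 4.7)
The plan is to substitute $\lambda=0$ into the two equations of \eqref{pdegan:eigenproblem} and then successively exploit them. The hypothesis $\beta=\phi_2'(0)\neq 0$ from the setup of the paper will be used throughout without further comment, since without it the second equation becomes trivial.

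First I would handle the second equation of \eqref{pdegan:eigenproblem}. At $\lambda=0$ it reads $\beta\langle \nabla_x u_0, \hat{v}\rangle_\mu = 0$ for every $\hat{v}\in H^1(\mu)$. Dividing by $\beta$ and taking $\hat{v}=\nabla_x u_0$ (which is a legitimate element of $L_2(\mu)$ because $u_0\in H^2(\mu)$) gives $\|\nabla_x u_0\|^2_\mu=0$, hence $\nabla_x u_0=0$ $\mu$-a.e. Since $\rho>0$ on all of $\mathbb{R}^d$, vanishing $\mu$-a.e. is equivalent to vanishing Lebesgue-a.e., and on the connected domain $\mathbb{R}^d$ this forces $u_0$ to be a constant $C$. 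This is the first half of \eqref{pdegan:kernel1}.

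Next I would plug $u_0\equiv C$ into the first equation, which at $\lambda=0$ reads
\begin{equation*}
-\alpha C\,\langle 1, \hat{u}\rangle_\mu - \beta\langle \nabla_x \hat{u}, v_0\rangle_\mu = 0,\quad \forall \hat{u}\in H^2(\mu).
\end{equation*}
Testing with $\hat{u}\equiv 1$ (which belongs to $H^2(\mu)$ because $\mu$ is a probability measure, so the constants have finite weighted $L_2$-norm and all distributional derivatives are zero) immediately kills the second term and yields $\alpha C\,\mu(\mathbb{R}^d)=\alpha C=0$. Hence $C=0$ whenever $\alpha\neq 0$, while $C\in\mathbb{R}$ is arbitrary when $\alpha=0$, exactly as claimed. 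What remains is the orthogonality condition $\beta\langle \nabla_x \hat{u}, v_0\rangle_\mu=0$ for all $\hat{u}\in H^2(\mu)$, which is the second part of \eqref{pdegan:kernel1}.

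Finally, to pass to the strong form \eqref{pdegan:kernel2}, I would integrate by parts: for any compactly supported smooth test function $\hat{u}$ (which is certainly in $H^2(\mu)$),
\begin{equation*}
0=\langle \nabla_x \hat{u}, v_0\rangle_\mu=\int \nabla_x \hat{u}\cdot v_0\,\rho\,dx=-\int \hat{u}\,\nabla_x\!\cdot\!(\rho v_0)\,dx,
\end{equation*}
with no boundary contribution thanks to compact support. By the fundamental lemma of the calculus of variations this gives $\nabla_x\!\cdot\!(\rho v_0)=0$ in the distributional sense, completing \eqref{pdegan:kernel2}. The only point that needs a little care is the justification that constants lie in $H^2(\mu)$ and that compactly supported test functions form a dense enough class; both follow from $\mu$ being a finite measure with a strictly positive density, which is part of the technical assumptions already in force.
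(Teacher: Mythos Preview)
Your proof is correct and follows essentially the same route as the paper's own argument: use the second equation of \eqref{pdegan:eigenproblem} at $\lambda=0$ to force $\nabla_x u_0=0$ and hence $u_0\equiv C$, then test the first equation with $\hat u=1$ to obtain $\alpha C=0$. You add welcome detail the paper omits (the explicit choice $\hat v=\nabla_x u_0$ and the integration-by-parts passage to the strong form); one small polish is that the test function $\hat v$ must lie in $H^1(\mu)$, not merely $L_2(\mu)$, which is still fine since $u_0\in H^2(\mu)$ guarantees $\nabla_x u_0\in H^1(\mu)$.
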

\begin{proof}
From \eqref{pdegan:eigenproblem} we observe that the element $(u_0, v_0)$ of the kernel satisfies the following equations $\forall \hat{u} \in H^2(\mu)$.
\begin{equation}
    u_0 = C, -\alpha \langle C, \hat{u} \rangle_{\mu} - \beta \langle \nabla_x \hat{u}, v_{0} \rangle_{\mu} = 0. 
\end{equation}
Let us choose $\hat{u}=1$. From the second equation it follows that $\alpha C = 0$ as desired.
\end{proof}
This kernel has a straightforward interpretation.  If we add a function $v_0$ that satisfies \eqref{pdegan:kernel1} to the $G_*$, the mapped measure will be the same up to second order terms. Indeed, consider \eqref{pdegan:kernel1} in the strong form. We obtain
 that $\nabla_x \cdot (\rho v_0) = 0$.
Recall that the function $x + v_0(x)  = x + \delta G(G_*^{-1}(x))$ maps a sample $x$ to a sample from the synthetic density produced by $G_* + \delta G$, i.e. $v_0(x)$ can be considered as a \emph{velocity} of each sample. If samples evolve with velocity $v_0(x)$, the differential equation for the density takes the form
\begin{equation*}
    \rho_t = \nabla_x \cdot (\rho v_0) = 0,
\end{equation*}
i.e., the density is invariant under such transformation. We will refer to the condition \eqref{pdegan:kernel1} as the \emph{divergence-free} condition on $v_0$. 

\subsection{Non-zero spectrum.}{\label{sec:nonzerozpec}}
\subsubsection{Weighted Laplace operator}
We now address non-zero eigenvalues of our operator. It will be convenient to utilize the \emph{weighted Laplace operator} $\Delta_{\mu}$ which is defined (in the weak form) as follows.
\begin{equation}\label{pdegan:deflap}
    \langle \Delta_{\mu} w, \hat{w} \rangle_{\mu} \coloneqq -\langle \nabla_{x} w, \nabla_{x} \hat{w} \rangle_{\mu}, \, \forall \hat{w} \in H^2(\mu).
\end{equation}
In the strong form $\Delta_{\mu}$ takes the following form:
\begin{equation}
    \Delta_{\mu} w = \frac{1}{\rho}\nabla_{x} \cdot (\rho \nabla_{x} w),
\end{equation}
which results in the standard Laplacian in the case of the standard Lebesgue measure on $\mathbb{R}^d$. This operator commonly appears in the study of the diffusion processes \citep{coifman2006diffusion} and weighted heat equations \citep{grigoryan2009heat}.
\paragraph{Spectrum of the weighted Laplacian.}
In what follows we will use eigenvalues and eigenfunctions of $\Delta_{\mu}$. Firstly, this a self-adjoint non-positive definite operator and under mild assumptions on $\mu$ \citep{cianchi2011discreteness,grigoryan2009heat}, e.g., when $\rho$ decreases sufficiently fast, it has a discrete spectrum. Let us study it in more detail. Consider the eigenvalue problem for $-\Delta_{\mu}$ written in the weak form. 
\begin{equation}
    \langle\nabla_x \hat{w}, \nabla_x w_{\xi}\rangle_{\mu} = \xi \langle \hat{w}, w_{\xi} \rangle_{\mu} \quad \forall \hat{w} \in H^2(\mu).
\end{equation}
We note that $w_0 \equiv 1$ is the eigenfunction with $\xi=0$; thus, due to self-adjointness of $\Delta_{\mu}$ for every other eigenfunction $w_{\xi}$ we have $\langle 1, w_{\xi} \rangle_{\mu}=0$, i.e., it has zero mean with respect to $\mu$. 
\paragraph{The Poincar\'e constant of $\mu$.}
Let us consider the smallest \emph{non-zero} eigenvalue $\xi_{\min}$ of $-\Delta_{\mu}$. We obtain that the following inequality holds.
\begin{equation}{\label{pdegan:poincareineq}}
     \langle\nabla_x w, \nabla_x w \rangle_{\mu} \geq \xi_{\min} \langle w,  w \rangle_{\mu}, \langle 1, w\rangle_{\mu} = 0, \,  \forall w \in H^2(\mu), 
\end{equation}
and the exact minimizer of this inequality is achieved by $w_{\xi_{\min}}$ \citep{grigoryan2009heat}. The value of $\xi_{\min}$ (sometimes its inverse) is called the \emph{Poincar\'e constant} of the measure $\mu$ and often appears in Sobolev-type inequalities \citep{adams2003sobolev}. The exact values of $\xi_{\min}$ for a given measure are almost never known analytically. To make use of our results in practical settings, we propose a simple neural network based approach for estimation of $\xi_{\min}$. We discuss it in Section~\ref{sec:practical}.
\paragraph{What is it exactly?} We can provide an intuitive meaning of $\xi_{\min}$ by drawing an analogy with graph Laplacians. In this case, the second smallest eigenvalue of the graph Laplacian, called the Fiedler value \citep{fiedler1973algebraic}, reflects how well connected the overall graph is. Thus, the Poincar\'{e} constant is in a way a continuous analog of this constant, reflecting the ``connectivity'' of a measure. This is the property that we would expect to impact the GAN convergence, as based on rich empirical evidence, the datasets that are more ``disconnected'' (such as ImageNet) are very challenging to model. We provide experimental evidence in support of this intuitive explanation in Section~\ref{pdegan:secexp}.
\subsubsection{Spectrum of lGAN}
We are now ready to fully describe the spectrum of our lGAN model.
Let $\{ \xi_{i} \}_{i=1}^{\infty}$ be the non-zero spectrum of $-\Delta_{\mu}$ and $\{ w_{\xi_i} \}_{i=1}^{\infty}$ be the set of the corresponding eigenfunctions. Recall from Theorem \ref{pdegan:thmapprox} that the constants $\alpha$ and $\beta$ are defined purely in terms of functions $\phi_1$ and $\phi_2$. We now state our main result.
\begin{theorem}\label{pdegan:eigenfunthm}
The non-zero spectrum of \eqref{pdegan:eigenproblem} is described as follows.
\begin{itemize}

 \item The eigenvalues are given by $\{ \lambda^{\pm}_{i} \}_{i=1}^{\infty}$ where $\lambda^{\pm}_{i}$ are roots of the quadratic equation:
\begin{equation}{\label{pdegan:quadeig}}
\lambda^2 + \alpha \lambda + \beta^2 \xi_i = 0.
\end{equation}
\item The corresponding eigenfunctions are written in terms eigenfunctions of $-\Delta_{\mu}$ as follows.
\begin{equation}
(u_{\lambda_i^{\pm}},v_{\lambda_i^{\pm}}) = (w_{\xi_i}, \frac{\beta}{\lambda_i^{\pm}} \nabla_x w_{\xi_i}).    
\end{equation}
\end{itemize}
\end{theorem}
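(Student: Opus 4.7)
The plan is to reduce the system \eqref{pdegan:eigenproblem} for a non-zero eigenvalue $\lambda$ to a scalar eigenvalue problem for the weighted Laplacian $\Delta_\mu$ acting on $u_\lambda$. The strategy is to first eliminate $v_\lambda$ by solving the second equation, substitute the result into the first, and recognize the remaining operator as $-\Delta_\mu$ via its defining identity \eqref{pdegan:deflap}.

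First I would use the second equation in \eqref{pdegan:eigenproblem}. Since it holds for every $\hat{v} \in H^1(\mu)$, the functional $\hat{v} \mapsto \langle \beta \nabla_x u_\lambda - \lambda v_\lambda, \hat{v}\rangle_\mu$ vanishes on a dense subspace of $L_2(\mu)$, so $\lambda v_\lambda = \beta \nabla_x u_\lambda$ as elements of $L_2(\mu)$. Using $\lambda \neq 0$, this gives the eigenfunction formula $v_\lambda = \tfrac{\beta}{\lambda} \nabla_x u_\lambda$ immediately, proving the second bullet point of the theorem modulo identifying $u_\lambda$.

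Next I would substitute this expression for $v_\lambda$ back into the first equation of \eqref{pdegan:eigenproblem}, yielding
\begin{equation*}
    \lambda \langle u_\lambda, \hat{u}\rangle_\mu = -\alpha \langle u_\lambda, \hat{u}\rangle_\mu - \frac{\beta^2}{\lambda}\langle \nabla_x \hat{u}, \nabla_x u_\lambda\rangle_\mu, \quad \forall \hat{u} \in H^2(\mu).
\end{equation*}
By the definition of the weighted Laplacian in \eqref{pdegan:deflap}, the last inner product equals $-\langle \Delta_\mu u_\lambda, \hat{u}\rangle_\mu$. Multiplying through by $\lambda$ and rearranging, I obtain
\begin{equation*}
    (\lambda^2 + \alpha\lambda)\langle u_\lambda, \hat{u}\rangle_\mu = \beta^2 \langle \Delta_\mu u_\lambda, \hat{u}\rangle_\mu, \quad \forall \hat{u} \in H^2(\mu),
\end{equation*}
which, by the arbitrariness of $\hat{u}$, forces $-\Delta_\mu u_\lambda = \xi \, u_\lambda$ in the weak sense with $\xi = -(\lambda^2 + \alpha\lambda)/\beta^2$. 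Hence $u_\lambda$ must be one of the eigenfunctions $w_{\xi_i}$ of $-\Delta_\mu$, and rearranging the relation between $\lambda$ and $\xi_i$ produces precisely the quadratic \eqref{pdegan:quadeig}.

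Finally I would observe that $\xi_i \neq 0$ must hold for $\lambda$ to be genuinely in the non-zero spectrum described in the theorem (the case $\xi_i = 0$ degenerates into the kernel analyzed by the preceding corollary, together possibly with the trivial root $\lambda = -\alpha$ attached to the constant eigenfunction). Conversely, for each non-zero $\xi_i$ and each root $\lambda_i^\pm$ of \eqref{pdegan:quadeig}, setting $u_\lambda = w_{\xi_i}$ and $v_\lambda = \tfrac{\beta}{\lambda_i^\pm} \nabla_x w_{\xi_i}$ and reversing the computation shows that \eqref{pdegan:eigenproblem} is satisfied. There is no genuine obstacle here; the only point requiring some care is the passage from the weak equation for $v_\lambda$ to the pointwise identity $\lambda v_\lambda = \beta \nabla_x u_\lambda$, which is standard given that $H^1(\mu)$ is dense in $L_2(\mu)$.
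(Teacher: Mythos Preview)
Your proposal is correct and follows essentially the same approach as the paper: eliminate $v_\lambda$ using the second equation of \eqref{pdegan:eigenproblem}, substitute into the first, and recognize the resulting scalar equation as the eigenvalue problem for $-\Delta_\mu$. The only cosmetic difference is that the paper substitutes $\hat v = \nabla_x \hat u$ directly into the second equation rather than first solving for $v_\lambda$ pointwise; your version is in fact slightly more complete, since you also supply the converse verification and address the degenerate case $\xi_i=0$, neither of which the paper's proof makes explicit.
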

\begin{proof}
By putting $\hat{v} = \nabla_x \hat{u}$ into the second equation of \eqref{pdegan:eigenproblem}, we get
\begin{equation*}
\lambda (u_{\lambda}, \hat{u})_{\mu} = -\alpha (u_{\lambda}, \hat{u})_{\mu} - \frac{\beta^2}{\lambda} (\nabla_x \hat{u}, \nabla_x u_{\lambda})_{\mu},
\end{equation*}
which can be rewritten as
\begin{equation*}
(\nabla_x \hat{u}, \nabla_x u_{\lambda})_{\mu} = \frac{1}{\beta^2}\left( (-\alpha \lambda - \lambda^2) (u_{\lambda}, \hat{u})_{\mu} \right) = \xi(u_{\lambda}, \hat{u})_{\mu},
\end{equation*}
which means that $\xi$ is an eigenvalue of $-\Delta_\mu$, and $u_{\lambda}$ is its eigenfunction.
The eigenvalue $\lambda$ can be found from the solution of the quadratic equation \eqref{pdegan:quadeig}
\begin{equation}\label{pdegan:quadeqsolve}
     \lambda = \frac{-\alpha \pm \sqrt{\alpha^2 - 4 \beta^2 \xi}}{2}.
\end{equation}
\end{proof}
With the explicit formulas for eigenvalues and eigenvectors at hand, we are now ready to analyze the convergence of the problem \eqref{pdegan:weakform}.


\section{Convergence}
The following theorem expresses the solution in terms of the eigenfunctions and also provides the convergence estimates. 
\begin{theorem}\label{pdegan:expansion}
Let $u_0 \in H^2(\mu),  v_0 \in H^1(\mu)$ and \mbox{$\int u_0 d \mu = c_0$}. Then, these functions can be written as
\begin{equation*}
\begin{split}
    &u_0 = c_0 + \sum_{k=1}^{\infty} (c_k^{+} + c_k^{-})w_{\xi_k},\\ 
    &v_0 = \widetilde{v}_0 + \nabla_x V_0, \, V_0 = \sum_{k=1}^{\infty} \Big( c_k^{+} \frac{\beta}{\lambda_i^+} + c_k^{-} \frac{\beta}{\lambda_i^{-}}\Big) w_{\xi_k},
\end{split}
\end{equation*}
and $\widetilde{v}_0$ is divergence-free, i.e. 
 $\langle\nabla_x \hat{u}, \widetilde{v}_0\rangle_{\mu} = 0$.
The coefficients $c_k^+$ and $c_k^-$ can be obtained as the solution of the linear systems:
\begin{equation}
    \begin{pmatrix}
    1 & 1 \\
    \frac{\beta}{\lambda_i^+} & \frac{\beta}{\lambda_i^-} 
    \end{pmatrix}
    \begin{pmatrix}
    c_k^+ \\
    c_k^-
    \end{pmatrix} = 
    \begin{pmatrix}
    \langle u_0, w_{\xi_k} \rangle_{\mu} \\
    \langle V_0, w_{\xi_k} \rangle_{\mu}
    \end{pmatrix}    
\end{equation}

 With this expansion, the solution to \eqref{pdegan:weakform} is 
 \begin{equation}
 \begin{split}
     & u(t) = c_0 e^{-\alpha t} + \sum_{k=1}^{\infty} w_{\xi_k} \left(c_k^+ e^{\lambda^{+}_k t} + c_k^- e^{\lambda^{-}_k t} \right),  \\
     & v(t) = \widetilde{v}_0 + \nabla_x V(t), \\
     & V(t) = \sum_{k=1}^{\infty} w_{\xi_k} \left(c_k^+\frac{\beta}{\lambda_i^+} e^{\lambda^{+}_k t} + c_k^- \frac{\beta}{\lambda_i^-}e^{\lambda^{-}_k t} \right).
     \end{split}
 \end{equation}
 
For $\alpha > 0$ the norms of $u(t)$ and $V(t)$ can be estimated as
 \begin{equation*}
     \Vert u(t) \Vert_{\mu} \leq  2 \Vert u_0 \Vert_{\mu} e^{\eta t}, \Vert V(t) \Vert_{\mu} \leq C \Vert V_0 \Vert_{\mu} e^{\eta t},
 \end{equation*}
 where $\eta = \mathrm{Re}\Big(\frac{-\alpha + \sqrt{\alpha^2 - 4 \beta^2\xi_{\min}}}{2}\Big) < 0 $ is the maximal real part of the eigenvalues. 
\end{theorem}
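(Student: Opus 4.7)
The plan is to diagonalize the operator on the right-hand side of \eqref{pdegan:weakform} using the eigenbasis from Theorem \ref{pdegan:eigenfunthm} together with the kernel identified in the Corollary, obtaining decoupled scalar ODEs for each mode, and then sum back up while controlling norms via Parseval.

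First I would establish the expansion. Under the stated assumptions on $\mu$, the operator $-\Delta_\mu$ has discrete spectrum and $\{1\}\cup\{w_{\xi_k}\}_{k\geq1}$ is an orthonormal basis of $L_2(\mu)$, which directly gives $u_0 = c_0 + \sum_k b_k w_{\xi_k}$ with $b_k=\langle u_0,w_{\xi_k}\rangle_\mu$. For $v_0\in H^1(\mu)$ I would invoke a weighted Helmholtz decomposition $v_0 = \widetilde{v}_0 + \nabla_x V_0$ by solving the weighted Poisson equation $\Delta_\mu V_0 = \tfrac{1}{\rho}\nabla_x\!\cdot\!(\rho v_0)$ via Fredholm theory for $\Delta_\mu$ (the right-hand side has zero $\mu$-mean, so a unique zero-mean solution $V_0\in H^2(\mu)$ exists), with $\widetilde{v}_0 := v_0 - \nabla_x V_0$ automatically divergence-free. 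Expanding $V_0 = \sum_k d_k w_{\xi_k}$, and matching the ansatz $u_0 = c_0+\sum_k(c_k^++c_k^-)w_{\xi_k}$, $V_0 = \sum_k(c_k^+\beta/\lambda_k^+ + c_k^-\beta/\lambda_k^-)w_{\xi_k}$ to $(b_k,d_k)$ yields the stated $2\times2$ linear system, invertible whenever $\lambda_k^+\neq\lambda_k^-$.

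Next I would invoke linearity. Each eigenpair $(w_{\xi_k},\tfrac{\beta}{\lambda_k^\pm}\nabla_x w_{\xi_k})$ evolves by \eqref{pdegan:timedep} as $e^{\lambda_k^\pm t}$; the constant mode evolves as $e^{-\alpha t}$ since $(1,0)$ satisfies \eqref{pdegan:eigenproblem} with $\lambda=-\alpha$; and $\widetilde{v}_0$ is stationary because it lies in the kernel per the Corollary. Summing these contributions reproduces the claimed formulas for $u(t)$, $v(t)$, and $V(t)$.

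For the convergence estimate, when $\alpha>0$ I would bound $\mathrm{Re}(\lambda_k^\pm)\leq\eta$ uniformly in $k$ by inspecting \eqref{pdegan:quadeqsolve}: for $\xi_k\geq\alpha^2/(4\beta^2)$ the discriminant is nonpositive and both roots share real part $-\alpha/2\leq\eta$, while for $\xi_k<\alpha^2/(4\beta^2)$ both roots are real and negative, with the upper root monotone decreasing in $\xi_k$ and hence maximized at $\xi_{\min}$. Parseval in the $\{w_{\xi_k}\}$ basis together with $|c_k^+ e^{\lambda_k^+ t}+c_k^- e^{\lambda_k^- t}|\leq e^{\eta t}(|c_k^+|+|c_k^-|)$ then yields both inequalities. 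The main obstacle I anticipate is tracking the constant $C$ in the $\|V(t)\|_\mu$ estimate: as $\xi_k$ approaches the discriminant threshold $\alpha^2/(4\beta^2)$ the system matrix $\bigl(\begin{smallmatrix}1 & 1\\ \beta/\lambda_k^+ & \beta/\lambda_k^-\end{smallmatrix}\bigr)$ degenerates, so controlling its inverse norm uniformly in $k$ (and hence $C$) requires either a spectral gap argument away from this threshold or a more refined estimate for the near-degenerate modes. A secondary technical point is justifying the weighted Helmholtz decomposition and spectral completeness under the mild regularity imposed on $\mu$, which I would handle by appealing to the compactness-of-resolvent results cited after \eqref{pdegan:deflap}.
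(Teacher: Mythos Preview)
Your proposal is correct and follows essentially the same route as the paper: weighted Helmholtz decomposition of $v_0$, eigenfunction expansion in the $\{w_{\xi_k}\}$ basis, mode-by-mode evolution via Theorem~\ref{pdegan:eigenfunthm}, and the kernel/constant mode handled separately. If anything, you are more careful than the paper's own proof, which simply cites the classical Helmholtz decomposition, asserts completeness of the eigenbasis, and does not address the degeneracy of the $2\times 2$ system or the uniformity of the constant $C$ that you flag; your concerns there are legitimate but go beyond what the paper justifies.
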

\begin{proof}
The decomposition of $v_0$ into a potential part and divergence-free part is a direct generalization of the classical result for the ordinary divergence and gradient, known as the Helmholtz decomposition \citep{griffiths2005introduction}. The divergence-free part $\widetilde{v}_0$ belongs to the kernel of the operator, thus it stays constant. The dynamics of $u$ and $v$ follows from the completeness of the eigenbasis of $\Delta_{\mu}$ and the assumption that its spectrum is discrete, thus we can expand them in this basis.  From Theorem~\ref{pdegan:eigenfunthm} each component in the sum is an eigenfunction, thus its time dynamics is just $e^{\lambda^{\pm}_k t}$. For the constant term in $u(t)$, by substituting $\hat{u}=1$ in \eqref{pdegan:weakform}, we obtain the following ODE $\langle u_t, 1 \rangle_{\mu} = -\alpha \langle u, 1 \rangle_{\mu}$, from which the statement follows.
\end{proof}
\paragraph{Discussion.}
Theorem~\ref{pdegan:expansion} provides the exact representation of the solution in terms of the eigenfunctions of $\Delta_{\mu}$. The eigenvalues are given as the solution of the quadratic equation \eqref{pdegan:quadeig}, thus the spectral properties of $-\Delta_{\mu}$ completely determine the dynamics of the convergence. Specifically, we observe that the speed of convergence is determined by the value of $\xi_{\min}$, i.e., the lowest non-zero mode of the weighted Laplacian.

There are two distinct cases. If the first eigenvalue of $-\Delta_{\mu}$ satisfies 
 $\alpha^2 - 4\beta^2 \xi_{\min} \leq 0,$     
(i.e., $\xi_{\min}$ is large enough), then all the eigenvalues $\lambda_i^{\pm}$ are complex, and their real part is equal to $-\frac{\alpha}{2}$. Ideally, we would have $\alpha^2 - 4\beta^2 \xi_{\min} = 0$, which would provide is with the optimal convergence speed and no oscillations for the highest mode. 
Consider now the case of a small $\xi_{\min} \ll \frac{\alpha^2}{4\beta^2}$. In this case, $\eta$ will be close to $0$, which would result in a slow convergence rate. These observations can be used to explain the success of various practical GAN training methods, as we show in Section~\ref{pdegan:secexp}.

In the case is $\alpha=0$, which holds, for instance, for WGAN, we obtain the well-known purely oscillatory behavior \citep{nagarajan2017gradient}. We also note that for $\alpha > 0$, the average value of the discriminator exponentially decays to $0$. This resembles the convergence plots of state-of-the-art GANs, see, e.g., \citet[Figure 6b]{karras2020training}. 


\paragraph{Example: normal distribution.}
Consider a model example of the normal distribution, $\mu \sim \mathcal N (0, 1)$. Then, $\mu$ has the density $\rho(x) = \frac{1}{\sqrt{2{\pi}}} e^{-x^2/2}.$ The eigenfunctions and eigenvalues of $-\Delta_{\mu}$ can be computed explicitly. The strong form of the eigenproblem is 
\begin{equation*}
    \frac{d}{d x} \rho \frac{d w_{\xi}}{d x} = -\xi \rho w_{\xi},
\end{equation*}
i.e. $w_{\xi}$ satisfies
\begin{equation}\label{pdegan:hermite-eig}
  \frac{d^2 w_{\xi}}{dx^2}  - x \frac{d w_{\xi}}{dx} = -\xi w_{\xi}.
\end{equation}
The solution of \eqref{pdegan:hermite-eig} exists for $\xi_k \in \mathbb{Z}_{\geq 0}$ and the corresponding eigenfunction is the Hermite polynomial:
$$w_{\xi_k} = H_k(x) = (-1)^n e^{\frac{x^2}{2}} \frac{d^n }{d x^n} e^{-\frac{x^2}{2}}.$$
The smallest non-zero eigenvalue is $1$. Therefore, for the LSGAN model, we will have the discriminant in \eqref{pdegan:quadeig} always non-positive, and the convergence $u$ and $v$ will be exponential with the rate $e^{-\frac{t}{2}}$. The solution also will oscillate due to the presence of complex eigenvalues.
\paragraph{What about neural networks?}
The discussion so far focuses on the functional spaces. In reality, these functions are approximated by neural networks, and the dynamics is written in the parameters $\theta_D$ and $\theta_G$ of these networks. Local convergence analysis of such dynamics is possible in these parameters \cite{mescheder2018training,nie2020towards}; however, such analysis involves eigenvalues of the Jacobian of the loss. It is not easy to connect these properties to the fundamental properties of the measure. The functional space analysis shows this connection. The approximation by neural networks (or by any other parametric representation) can be thought of as a \emph{spatial discretization} of PDEs. If the number of parameters increases, the eigenvalues of the discretized problem should approximate the eigenvalues of the infinite-dimensional problem. It is worth noting that different discretizations (for example, different neural network architectures) may lead to different properties. One can compare such discretizations by looking at the quality of approximation of the eigenvalues of the weighted Laplace operator (see Section~\ref{sec:practical} for the algorithmic details). Also, such techniques used in practice as Jacobian regularization \citep{karras2020analyzing} can be considered as methods for choosing a better-conditioned discretization. Finally, for future research, we would like to mention that the lGAN problem is similar to the saddle point problems \cite{benzi2005numerical} appearing in mathematical modeling of fluid problems. For example, for robust discretization, one has to choose discretization spaces of $u$ (`pressure'), and $v$ (`velocity') such that they satisfy the famous LBB (Ladyzhenskaya-Babu\v{s}ka-Brezzi) condition \cite{boffi2013mixed, ladyzhenskaya1969mathematical} in order to get good convergence properties. This task requires systematic study, and we leave it for future research.
\section{Effects of common practices on the asymptotic convergence}\label{sec:common}
\paragraph{Regularization.}
Several studies have shown that if we penalize the norm of the gradient of the GAN objective with respect to the discriminator, it improves the convergence \citep{gulrajani2017improved,mescheder2018training}. In our case, it results in an additional term $\frac{\gamma}{2} E_{\mu} \Vert \nabla_D f(D, G) \Vert^2$. After linearization we obtain a regularized loss $\hat{g}(u, v)$.
\begin{equation*}
    \max_v \min_u \hat{g}(u, v), \quad \hat{g}(u, v) = g(u, v) + \frac{\gamma}{2}  \E_{\mu}  \Vert \nabla_x u \Vert^2.
\end{equation*}
New eigenvalue problem has the form
\begin{equation*}
        \begin{split}
    &\lambda \langle u_{\lambda}, \hat{u}\rangle_{\mu} = -\alpha \langle u_{\lambda}, \hat{u}\rangle_{\mu} - \beta \langle\nabla_x \hat{u}, v_{\lambda}\rangle_{\mu} - \gamma \langle \nabla_x u, \nabla_x \hat{u} \rangle, \\
    &\lambda \langle v_{\lambda}, \hat{v}\rangle_{\mu} = \beta \langle \nabla_x u_{\lambda}, \hat{v}\rangle_{\mu}, \forall \hat{v} \in H^1(\mu), \, \hat{u} \in H^2(\mu).
    \end{split}
\end{equation*}
The kernel stays the same, and moreover, the $u_{\lambda}$ component is also the same, which can be seen again by using $\hat{v} = \nabla_x \hat{u}$ in the second equation. The only difference is the connection between eigenvalues of the weighted Laplace operator and eigenvalues of the linearized problem, which changes as follows:
\begin{equation*}
    \lambda^2 + (\alpha + \gamma \xi_i) \lambda + \beta^2 \xi_i = 0, 
\end{equation*}
yielding
\begin{equation}\label{pdegan:regularizedeig}
    \lambda_i^{\pm} = \frac{-(\alpha + \gamma \xi_i) \pm \sqrt{(\alpha + \gamma \xi_i)^2 - 4 \beta^2 \xi_i}}{2}.
\end{equation}
\paragraph{Optimal parameters.}
One of the most interesting conclusions of our analysis is that the convergence is determined by the connection between $\alpha, \beta, \gamma$ and $\xi$. If we fix $\phi_1, \phi_2$ in advance, we can not control $\alpha$ and $\beta$ and convergence for a particular dataset will be determined by $\xi_{\min}$. I.e., for one dataset the loss may work well, but for another it may fail. An alternative approach is to optimize over the hyperparameters of the loss and regularizer taking the Poincar\'{e} constant into account.
Expression \eqref{pdegan:regularizedeig} gives a direct way to do that. The eigenvalue with the maximal real part corresponds to $\xi = \xi_{\min}$ and is given by the formula \begin{equation*}
    \lambda_{\max} = \frac{-(\alpha + \gamma \xi_{\min}) + \sqrt{(\alpha + \gamma \xi_{\min})^2 - 4 \beta^2 \xi_{\min}}}{2}.
\end{equation*}
The maximum is obtained if the discriminant is equal to $0$, which gives
\begin{equation*}
    \alpha + \gamma \xi_{\min} = 2 |\beta| \sqrt{\xi_{\min}}.
\end{equation*}
Another desirable property is the absence of oscillations. This is only possible if the quadratic function under the square root is non-negative. Since it is equal to zero at $\xi = \xi_{\min}$, it is necessary and sufficient for it to have non-negative derivative:
\begin{equation*}
     2 \gamma ( \alpha + \gamma \xi_{\min}) \geq 4 \beta^2 \rightarrow \gamma \geq \frac{|\beta|}{\sqrt{\xi_{\min}}}.
\end{equation*}
Simple analysis gives the following conditions for the parameters $\alpha, \beta, \gamma$ such that we have optimal local convergence rate and all the eigenvalues of the linearized operator are real:
\begin{equation}\label{pdegan:optpar}
\begin{split}
    &\alpha + \gamma \xi_{\min} = 2 |\beta| \sqrt{\xi_{\min}}, \\
    &\frac{|\beta|}{\sqrt {\xi_{\min}}} \leq \gamma \leq \frac{2 |\beta|}{\sqrt {\xi_{\min}}}, \, 0 \leq \alpha \leq \frac{|\beta|}{\sqrt{\xi_{\min}}}. 
    \end{split}
\end{equation}

\paragraph{Discrete approximation of the time dynamics.}
The dynamics \eqref{pdegan:weakform} can be written in the operator form as 
\begin{equation}
 w_t = \mathcal{L} w, \, w(0) = w_0, \, w = \begin{bmatrix} u \\ v \end{bmatrix}.
\end{equation}
The usage of gradient descent optimization for this problem is equivalent to the forward Euler scheme: 
\begin{equation*}
  w_{k+1} = w_k + \tau \mathcal{L} w_k,
\end{equation*}
and the eigenvalues of the discretized operator are equal to 
\begin{equation}
    \lambda_i^{e} = 1 + \tau \lambda_i,
\end{equation}
where $\lambda$ are given by Theorem~\ref{pdegan:eigenfunthm}. Since for all $\xi_i$ such that $\alpha^2 - 4 \beta^2 \xi_i < 0$ the eigenvalue $\lambda_i$ is complex, the modulus of the corresponding $\lambda^e_i$ is 
\begin{equation*}
    \vert \lambda_i^e \vert^2 = \left(1 - \frac{\tau \alpha}{2}\right)^2 + \tau^2 \left(4 \beta^2 \xi_i - \alpha^2\right).
\end{equation*}
Since  $\Delta_\mu$ is an unbounded operator under mild assumptions on $\mu$ 
there exists an eigenvalue $\xi_i$ of it that makes $\vert \lambda_i^e \vert > 1$, i.e. the Euler scheme is \emph{absolutely unstable}. On the discrete level, when the functions are approximated by a neural network, we might be working in a subspace that does not contain eigenvalues that are too large, and the method might actually converge; but this is very problem specific. As has been noted by \citet{qin2020training} the usage of more advanced time integration schemes leads to competitive GAN training results even without such functional constraints as spectral normalization \citep{miyato2018spectral}. If the parameters are selected in the range \eqref{pdegan:optpar}, then there are no complex eigenvalues, and we can make Euler scheme convergent. This requires regularization. Another important research direction is the development of more suitable time discretization schemes. The system \eqref{pdegan:weakform} belongs to the class of \emph{hyperbolic problems}, and special time discretization schemes have to be used. One class of such methods are total variation diminishing (TVD) schemes \cite{gottlieb1998total}. Note, that one of the methods considered in \citet{qin2020training} is the Heun method \citep{suli2003introduction} which is a second-order TVD Runge-Kutta scheme, which confirms its empirical efficiency. Thus, our analysis provides theoretical foundation for the results of \citet{qin2020training}. 

\paragraph{Data augmentation.}
One of the most successful practical tricks significantly improving GAN convergence have been the usage of data augmentation \citep{karras2020training,zhang2019consistency,zhao2020image}. In our framework, the usage of data augmentations is reflected in the shift of $\xi_{\min}$. Based on the intuition discussed earlier, higher values of $\xi_{\min}$ correspond to more ``connected'' distributions and allow for faster convergence. This is exactly what is intuitively achieved by data augmentation: we fill the ``holes'' in our dataset with new samples and make it more connected. We describe our experiments confirming this idea in Section~\ref{pdegan:secexp}.

\paragraph{Practical estimation of $\xi_{\min}$.}{\label{sec:practical}}
For practical analysis of GAN convergence, we need to estimate the value of the Poincar\'e constant $\xi_{\min}$ for a given dataset. This can be performed in the standard supervised learning manner. Recall from the definition that $\xi_{\min}$ is the minimizer of the following optimization problem (commonly called the Rayleigh quotient).
\begin{equation}{\label{eq:rayleigh}}
    \frac{\E_{\mu} \| \nabla_x f(x) \|^2}{\mathrm{Var}_{\mu} f(x)} \to \min_{f \in H^2(\mu)}.
\end{equation}
In practice we can parameterize $f$ with a neural network, and perform optimization of \eqref{eq:rayleigh} in a stochastic manner by replacing expectations over $\mu$ with their empirical counterparts (over a batch of inputs). Due to the scale invariance of the Rayleigh quotient, we employ spectral normalization \citep{miyato2018spectral} as an additional regularization; this also falls in line with commonly used discriminator architectures. 
To summarize, we utilize a neural network $f(x; \theta)$ and for a dataset $X = \lbrace X_i \rbrace_{i=1}^N$ consider the batched version of the following loss.
\begin{equation}
   \mathcal{L}(\theta) = \frac{\frac{1}{N}\sum_{i=1}^{N}\|\nabla_x f(X_i; \theta)\|^2}{\frac{1}{N} \sum_{i=1}^N f(X_i)^2 - (\frac{1}{N} \sum_{i=1}^N f(X_i))^2},
\end{equation}
and, correspondingly, $\xi_{\min} \approx \min_{\theta}\mathcal{L}(\theta)$.
Minimization of this loss function can be performed with standard optimizers such as SGD or Adam \citep{kingma2014adam}.
\section{Experiments}{\label{pdegan:secexp}}
\paragraph{Experimental setup.} Our experiments are organized as follows. We start by numerically investigating the value of $\xi_{\min}$ and impact of formulas from Section~\ref{sec:common} on the convergence on synthetic datasets. We then study the more practical CIFAR-10 \citep{krizhevsky2009learning}
dataset. Firstly, we show the correlation between the $\xi_{\min}$ obtained for various augmented versions of the dataset and FID values obtained for the corresponding GAN. We then show that the similar correlation holds when we perform \emph{instance selection}, a recently proposed technique shown to improve GAN convergence \citep{devries2020instance}. For the synthetic datasets, our experiments were performed in JAX \citep{jax2018github} using the ODE-GAN code available at GitHub\footnote{\url{https://github.com/deepmind/deepmind-research/tree/master/ode_gan}}. For CIFAR-10 experiments we utilized PyTorch \citep{NEURIPS2019_9015}. Our experiments were performed on a single NVidia V-100 GPU.

\paragraph{Gaussian Mixture.}
In order to study the effect of $\xi_{\min}$ and the choice of $\alpha$ and $\gamma$ on the training procedure, we set up a simple one-dimensional test: a mixture of two normal distributions $\mathcal{N}(0, 1)$ and $\mathcal{N}(D, 1)$, where $D$ is the separation parameter. Intuitively, the larger the $D$, the smaller is $\xi_{\min}$, since it reduces connectivity of our measure. We also verify it numerically, as shown in Figure~\ref{fig:xi_gauss} (top). In this case, we utilize a simple two-layer MLP with spectral normalization on top of linear layers. We observe that indeed the value of $\xi_{\min}$ decays rapidly with the increase of the separation parameter $D$.

To experiment with GAN convergence in this toy setting, we setup two MLP models for $D$ and $G$, and train the LSGAN-like model with
\begin{equation*}
    \phi_1(x) = \frac{\alpha}{2}x^2+\beta x, \, \phi_2(x) = -\beta x, 
\end{equation*}
and gradient penalty with factor $\gamma$.
We train the model using the ODE-GAN approach with the Heun method. We experiment with two moderately separated mixtures, namely with $D=3$ and $D=4$. Respective $\xi_{\min}$ values obtained by the numerical simulation described above are $\xi_{\min}=0.25$ and $\xi_{\min}=0.124$. We consider five options for the values of $(\alpha, \beta, \gamma)$. We start with the baseline WGAN corresponding to $(0, -1, 0)$. We consider two options for the value of $\gamma$: the optimal one $\gamma=\sfrac{2|\beta|}{\sqrt{\xi_{\min}}}$ predicted by the theory, and a sub-optimal $\gamma=\sfrac{|\beta|}{\sqrt{\xi_{\min}}}$. With the latter value we obtain $\mathrm{Im} \ {\lambda_{\max}}\sim |\beta|\sqrt{\xi_{\min}}$. We also consider two LSGAN variants. First one is the default version with the parameters $(0.25, -0.5, 0)$. For the second version we fix $\beta=1$ and choose the optimal $\alpha$ and $\gamma$ according to \eqref{pdegan:optpar}. We measure performance of a GAN model via the Earth Mover's Distance between the Gaussians fitted on real and synthetic data. This approach resembles the commonly used Frech\'{e}t Inception Distance (FID) metric for GAN evaluation.
Results are visualized on Figure~\ref{fig:xi_gauss}. We observe that the resulting convergence plots match our theoretical predictions. In particular, we see that when the parameters are chosen in the optimal way, methods converge more rapidly and stably. On the other hand, when $\gamma$ is not tuned or is sub-optimal, the convergence is more oscillatory, and GANs struggle to converge. Note that even for the optimal $\gamma$ for methods with $\alpha > 0$, we still observe some oscillations. This may be a result of a noise induced by neural function approximation or stochasticity of training.

\begin{figure}[htb!]
    \centering
    \includegraphics[width=0.99\linewidth]{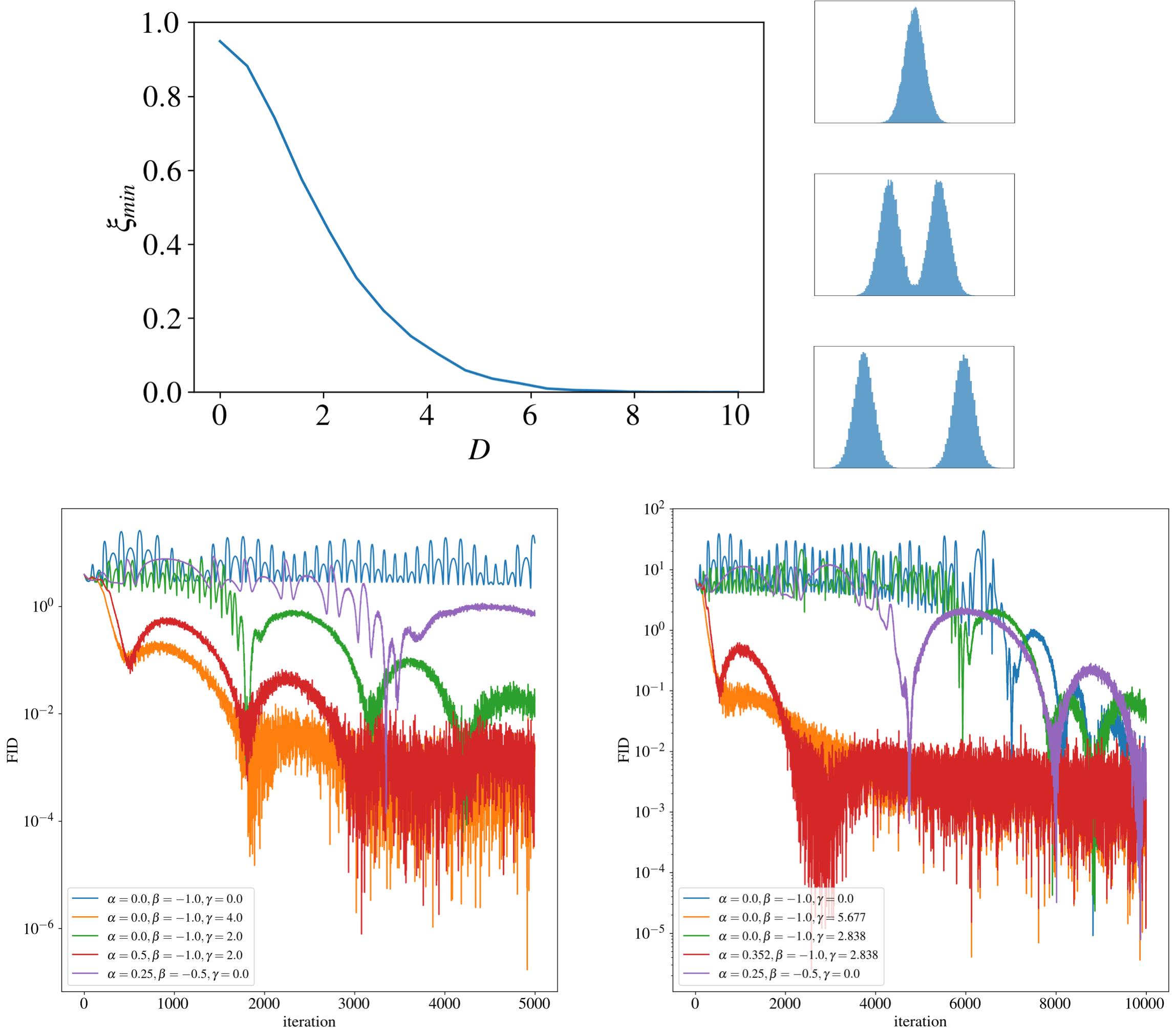}
    \caption{Numerical experiments with GAN convergence on a mixture of two univariate Gaussians. \textit{(Top)} Estimated value of $\xi_{\min}$ vs the mixture separating parameter $D$. \textit{(Bottom)} Convergence plots of GANs with different loss functions and regularization strengths. Methods with parameters selected optimally according to theory present better convergence.} 
    \label{fig:xi_gauss}
\end{figure}

\paragraph{Data augmentation of CIFAR-10.} In this set of experiments, we verify if the effects of data augmentations on GAN convergence can be predicted by evaluating $\xi_{\min}$ for the correspondingly augmented dataset. We consider a number of augmentations commonly utilized in data augmentation pipelines for GANs (see Figure~\ref{fig:aug}). 
\begin{figure}[htb!]
    \centering
    \includegraphics[width=0.99\linewidth]{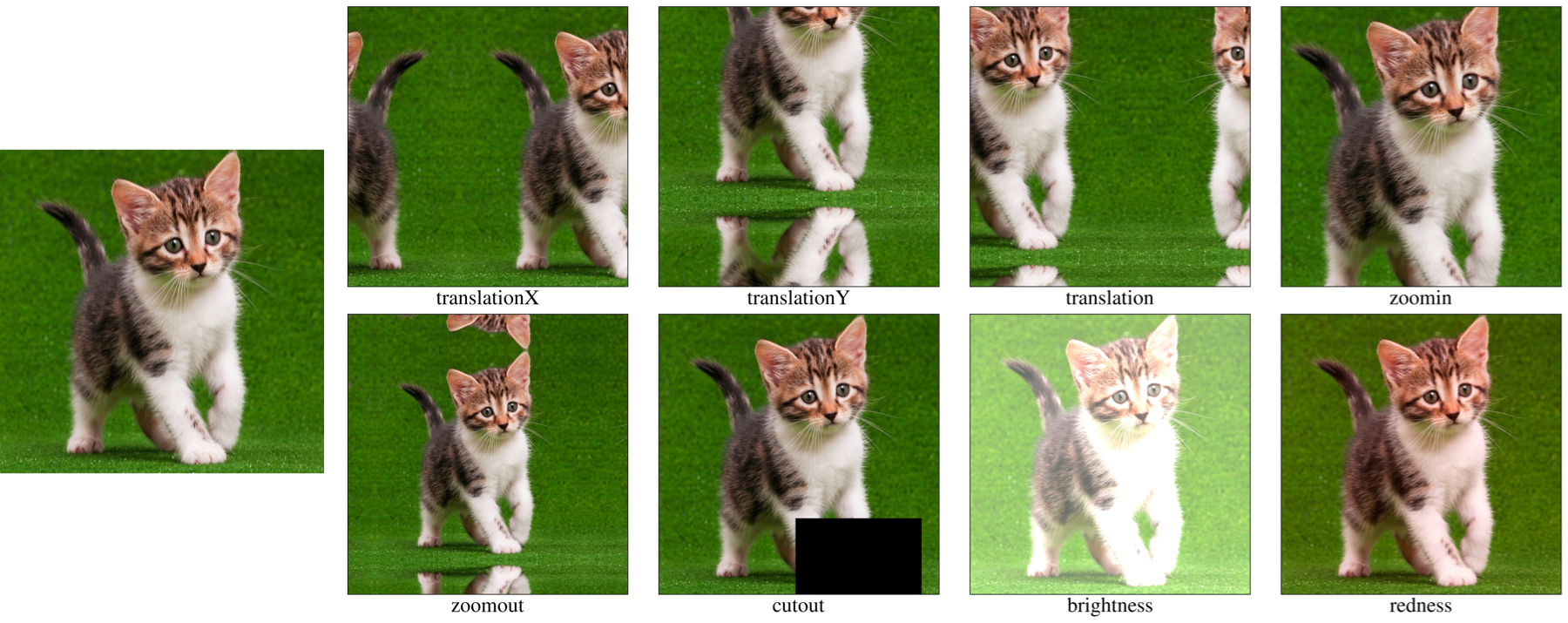}
    \caption{Various types of augmentations commonly used in GAN training.}
    \label{fig:aug}
\end{figure}
In particular, these augmentations include both spatial (translations, zooming) and visual augmentations (color adjustment). Folklore knowledge is that the first type is generally helpful, while the second is not. In our framework, the positive effect of augmentations can be understood as improving the ``connectivity'' of a dataset, which can be quantitatively measured by evaluating $\xi_{\min}$ of the augmented version of a dataset. Increased value of $\xi_{\min}$ both intuitively and theoretically (as supported by Theorem~\ref{pdegan:eigenfunthm}) corresponds to better convergence and better FID scores. Our experiments are based on a thorough analysis of the impact of augmentations on the quality of a GAN trained on CIFAR-10 provided in \citet{zhao2020image}. Specifically, the authors select a \emph{single} augmentation from a predefined set and train several types of GANs by augmenting real and fake images. The strength of the augmentation is controlled by a single parameter $\lambda$, and the authors provide the obtained FID for a number of its possible values (e.g., we may vary how strongly we zoom an image); we refer the reader to \citet{zhao2020image} for specific details on each augmentation. We selected a large portion of augmentations studied in this paper and replicated its data augmentation setup. For each augmentation and each strength value, we minimize the Rayleigh quotient with a neural network mimicking the SNDCGAN discriminator from \citet{zhao2020image}. We train it on the train part of the dataset by applying the respective augmentation to each image with probability one. Note that the actual augmentation strength is randomly sampled from the range $[0, \lambda]$, so we cover the entire distribution relatively well. For training, we use the batch size of $256$ and Adam optimizer with a learning rate $10^{-4}$ (results are not sensitive to these parameters). We measure the actual value of $\xi_{\min}$ by sweeping across the augmented test set $5$ times to better cover the distribution and aggregate the results across all $50000$ (augmented) samples. For the reference values, we choose the FID scores obtained by the SNDCGAN model with Balanced Consistency Regularization (bCR) compiled from \citet[Figure 3]{zhao2020image}. This model achieves better generation quality so our theory is more reliable in this case. Our results are provided at Figure~\ref{fig:cifar10_aug}. For convenience, we normalize the obtained values by dividing it by $\xi_{\min}$ of the non-augmented dataset (approximately equal to $0.0042$ by our estimation). We observe that FID scores in many cases indeed follow the behavior of $\xi_{\min}$. For instance, for \texttt{zoomin} and \texttt{cutout}, the estimated $\xi_{\min}$ values predict that there is an intermediate optimal augmentation strength, which is matched by the practice. For \texttt{translation}, we observe that stronger augmentations worsen the connectivity of the dataset, which results in worse FID scores. We can also observe a significant drop in $\xi_{\min}$ for color-based augmentations, confirming their practical inefficiency.
We note, in some cases (e.g., \texttt{zoomout}), we do not observe a direct correspondence. This may be a result of auxiliary effects not covered by our theory or of a discrepancy between ours and the authors' implementation.

\begin{figure}[htb!]
    \centering
    \includegraphics[width=0.99\linewidth]{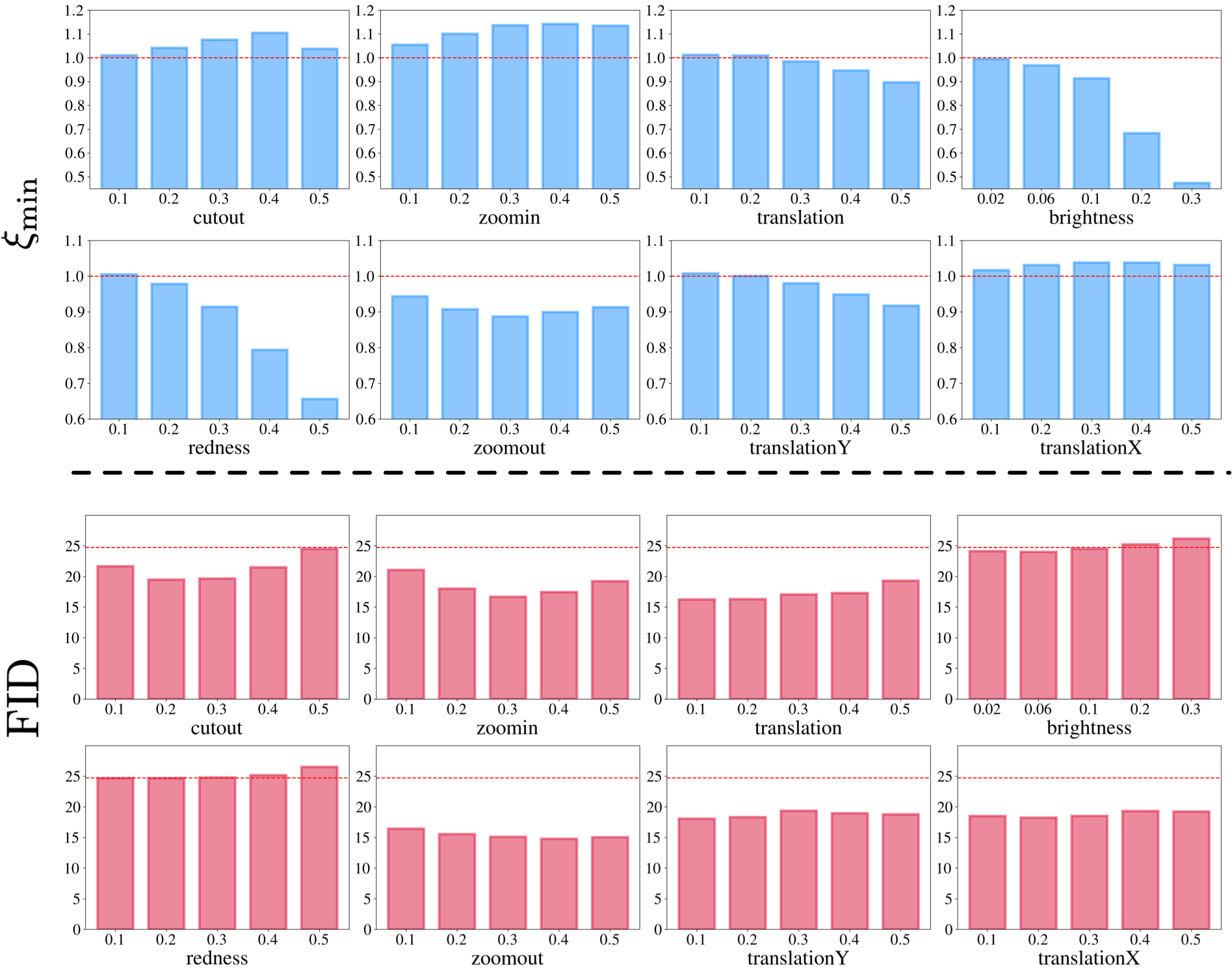}
    \caption{\textit{(Top)} The values of $\xi_{\min}$ for augmented versions of CIFAR-10 for different values of the augmentation strength. We normalize them dividing by $\xi_{\min}\approx 0.0042$ of the non-augmented version of the dataset. \textit{(Bottom)} The FID values of a SNDCGAN (with balanced consistency regularization) trained on the corresponding dataset. The values are taken from \citet{zhao2020image}. For $\xi_{\min}$ a higher value is ``better'' in terms of connectivity of dataset and theoretical convergence. For FID smaller values are better. In most cases we observe a negative rank correlation between $\xi_{\min}$ and FID.
    }
    \label{fig:cifar10_aug}
\end{figure}
\paragraph{Instance selection.}

Another possible way to manipulate the data distribution is to remove non-typical samples, which can confuse the generator \citep{devries2020instance}. This can be performed by fitting a Gaussian model on feature vectors of a dataset produced by some pretrained model (in our experiments, we used Inception v3 \citep{szegedy2016rethinking}) and keeping only samples with high likelihood under this model. This procedure results in better FID scores of trained GANs, as shown in \citet{devries2020instance}. In our framework, this can be understood as another way to improve the dataset's connectivity since by removing outliers, we reduce the number of gaps in the data. Experiments in the aforementioned paper were conducted on the resized ImageNet dataset; however, we hypothesize that this phenomenon may also be understood on the conceptually similar CIFAR-10 dataset. We perform an experiment confirming this effect quantitatively. We follow the steps described above, and for each value of $\psi \in \lbrace 0.1, 0.2, 0.3, 0.4, 0.5 \rbrace$ evaluate $\xi_{\min}$ of the dataset obtained by removing samples in the bottom $\psi$-quantile of the likelihood. Our results are provided in Table~\ref{tab:is}. We observe that for higher values of the truncation parameter, we indeed obtain better dataset connectivity, confirming practical findings of \citet{devries2020instance}.
\begin{table}
\centering
\caption{Normalized differences in $\xi_{\min}$ for CIFAR-10 with instance selection performed according to the procedure outlined in \citet{devries2020instance} with respect to the truncation (lower) quantile $\psi$. $\Delta \xi_{\min}$ is computed as $(\sfrac{\xi_{\min}(\psi)}{\xi_{\min}(0)} - 1) \times 10^2$ for easier comparison.
We observe that stronger truncation increases the value of $\xi_{\min}$, confirming better empirical GAN convergence. }
\label{tab:is}
\vspace{0.1cm}
\scalebox{1.0}{\begin{tabular}{lrrrrrr}
\toprule
$\psi$ & $0$ & $0.1$ & $0.2$ & $0.3$ & $0.4$ & $0.5$ \\
$\Delta \xi_{\min} \uparrow $ & $0$ & $0.95$ & $-0.04$ & $1.24$ & $\mathbf{2.90}$ & $2.74$ \\
\bottomrule

\end{tabular}}
\end{table}
\section{Conclusion}
We presented a novel framework for a theoretical understanding of GAN training by analyzing the local convergence in the functional space. Namely, we represent the GAN dynamics as a system of partial differential equations and analyze the spectrum of the corresponding differential operator, which determines the dynamics convergence. As the main result, we show how the spectrum depends on the properties of the target distribution, in particular, on its Poincar\'{e} constant. Our perspective provides a new understanding of established GAN tricks, such as gradient penalty or dataset augmentation. For practitioners, our paper develops an efficient method that allows to choose optimal augmentations for a particular dataset.

\bibliography{example_paper}
\bibliographystyle{icml2021}

\end{document}